\newtheorem{theorem}{Theorem}[section]
\newtheorem{lemma}[theorem]{Lemma}
\newcommand{\eat}[1]{}
\newenvironment{definition}[1][Definition]{\begin{trivlist}
\item[\hskip \labelsep {\bfseries #1}]}{\end{trivlist}}
\def \w {{\mathbf{w}}}
\def \w {\mathbf{w}}
\title{Enhancing Simple Models by Exploiting What They Already Know}
\begin{document}

\author[a]{Amit Dhurandhar}
\author[b]{Karthikeyan Shanmugam}
\author[c]{Ronny Luss}
\affil[ ]{ IBM Research, Yorktown Heights, NY.}

\maketitle
\unmarkedfntext{\small \\   
	\textsuperscript{a} \texttt{adhuran@us.ibm.com}  \\ 
	\textsuperscript{b} \texttt{karthikeyan.shanmugam2@ibm.com} \\  \textsuperscript{c} \texttt{rluss@us.ibm.com}}

\begin{abstract}
There has been recent interest in improving performance of simple models for multiple reasons such as interpretability, robust learning from small data, deployment in memory constrained settings as well as environmental considerations. In this paper, we propose a novel method SRatio that can utilize information from high performing complex models (viz. deep neural networks, boosted trees, random forests) to reweight a training dataset for a potentially low performing simple model of much lower complexity such as a decision tree or a shallow network enhancing its performance. Our method also leverages the per sample hardness estimate of the simple model which is not the case with the prior works which primarily consider the complex model's confidences/predictions and is thus conceptually novel. Moreover, we generalize and formalize the concept of attaching probes to intermediate layers of a neural network to other commonly used classifiers and incorporate this into our method. The benefit of these contributions is witnessed in the experiments where on 6 UCI datasets and CIFAR-10 we outperform competitors in a majority (16 out of 27) of the cases and tie for best performance in the remaining cases. In fact, in a couple of cases, we even approach the complex model's performance. We also conduct further experiments to validate assertions and intuitively understand why our method works. Theoretically, we motivate our approach by showing that the weighted loss minimized by simple models using our weighting upper bounds the loss of the complex model.
\end{abstract}

\section{Introduction}

Simple models such as decision trees or rule lists or shallow neural networks still find use in multiple settings where a) (global) interpretability is needed,  b) small data sizes are available, or c) memory/computational constraints are prevalent \cite{profweight}. In such settings compact or understandable models are often preferred over high performing complex models, where the combination of a human with an interpretable model can have better on-field performance than simply using the best performing black box model \cite{kushwhyInt}. For example, a manufacturing engineer with an interpretable model may be able to obtain precise knowledge of how an out-of-spec product was produced and can potentially go back to fix the process as opposed to having little-to-no knowledge of how the decision was reached. Posthoc local explainability methods \cite{lime,bach2015pixel,cem} can help delve into the local behavior of black box models, however, besides the explanations being only local, there is no guarantee that they are in fact true \cite{rudin}. There is also a growing concern of the carbon footprint left behind in training complex deep models \cite{deepenv}, which for some popular architectures is more than that left behind by a car over its entire lifetime.

In this paper, we propose a method, SRatio, which reweights the training set to improve simple models given access to a highly accurate complex model such as a deep neural network, boosted trees, or some other predictive model. Given the applications we are interested in, such as interpretability or deployment of models in resource limited settings, we assume the complexity of the simple models to be predetermined or fixed (viz. decision tree of height $\le$ 5). We cannot grow arbitrary size ensembles such as in boosting or bagging \cite{boost}. Our method applies potentially to any complex-simple model combination which is not the case for some state-of-the-art methods in this space such as Knowledge Distillation \cite{distill} or Profweight \cite{profweight}, where the complex model is assumed to be a deep neural network. In addition, we generalize and formalize the concept of probes presented in \cite{profweight} and provide examples of what they would correspond to for classifiers other than neural networks. Our method also uses the a priori low performing simple model's confidences to enhance its performance. We believe this to be conceptually novel compared to existing methods which seem to \emph{only} leverage the complex model (viz. its predictions/confidences). The benefit is seen in experiments where we outperform other competitors in a majority of the cases and are tied with one or more methods for best performance in the remaining cases. In fact, in a couple of cases we even approach the complex model's performance, i.e. a single tree is made to be as accurate as 100 boosted trees. Moreover, we motivate our approach by contrasting it with covariate shift and show that our weighting scheme where we now minimize the weighted loss of the simple model is equivalent to minimizing an upper bound on the loss of the complex model\eat{for commonly used loss functions such as cross-entropy loss, least squares loss, and hinge loss}.

\section{Related Work}

Knowledge Distillation \cite{distill,distillnew,priv16} is one of the most popular approaches for building "simpler" neural networks. It typically involves minimizing the cross-entropy loss of a simpler network based on calibrated confidences \cite{calib} of a more complex network. The simpler networks are usually not that simple in that they are typically of the same (or similar) depth but thinned down \cite{fitnet}. This is generally insufficient to meet tight resource constraints \cite{pc1}. Moreover, the thinning down was shown for convolutional neural networks but it is unclear how one would do the same for modern architectures such as ResNets. The weighting of training inputs approach on the other hand can be more easily applied to different architectures. It also has another advantage in that it can be readily applied to models optimizing losses other than cross-entropy (viz. hinge loss, squared loss) with some interpretation of which inputs are more (or less) important. Some other strategies to improve simple models \cite{modelcompr,modelcompr2,bastani2017interpreting} are also conceptually similar to Distillation, where the actual outputs are replaced by predictions from the complex model. 

In \cite{dehghani2017fidelity}, authors use soft targets and their uncertainty estimates to inform a student model on a larger dataset with more noisy labels. Uncertainty estimates are obtained from Gaussian Process Regression done on a dataset that has less noisy labels. In \cite{furlanello2018born}, authors train a student neural network that is identically parameterized to the original one by fitting to soft scores rescaled by temperature. In our problem, the complexity of the student is very different from that of the teacher and we do compare with distillation-like schemes. \cite{frosst2017distilling} define a new class of decision trees called soft decision trees to enable it to fit soft targets (classic distillation) of a neural network. Our methods use existing training algorithms for well-known simple models. \cite{ren2018learning} advocate reweighting samples as a way to make deep learning robust by tuning the weights on a validation set through gradient descent. Our problem is about using knowledge from a pre-trained complex model to improve a simple model through weighting samples.

The most relevant work to our current endeavor is ProfWeight \cite{profweight}, where they too weight the training inputs. The weights are determined based on the output confidences of linear classifiers attached to intermediate representations (called probes) of a deep neural network. Their method however, requires the complex model to be a neural network and thus does not apply to settings where we have a different complex model. Moreover, their method, like Distillation, takes into account only the complex model's assessment of an example's difficulty.

Curriculum learning (CL) \cite{curriculumL} and boosting \cite{boost} are two other approaches which rely on weighting samples, however, their motivation and setup are significantly different. In both CL and boosting the complexity of the improved learner can increase as they do not have to respect constraints such as interpretability \cite{tip,montavon2017methods} or limited memory/power \cite{pc1,pc2}. In CL, typically, there is no automatic gradation of example difficulty during training. In boosting, the examples are graded with respect to a previous weak learner and not an independent accurate complex model. Also, as we later show, our method does not necessarily up-weight hard examples but rather uses a measure that takes into account hardness as assessed by both the complex and simple models.

\section{Methodology}
In this section, we first provide theoretical and intuitive justification for our approach. This is followed by a description of our method for improving simple models using both the complex and simple model's predictions. The key novelty lies in the use of the simple model's prediction, which also makes the theory non-trivial yet practical. Rather than use the complex model's prediction, we generalize a concept from \cite{profweight} where they attached probe functions to each layer of a neural network, obtained predictions using only the first $k$ layers ($k$ being varied up to the total number of layers), and used the mean of the probe predictions rather than the output of only the last layer. They empirically showed the extra information from previous layers to improve upon only using the final layer's output. Our generalization, which we call graded classifiers and formally define below, extracts progressive information from other models (beyond neural networks). The graded classifiers provide better performance than using only the output of complex model, as illustrated by our various experiments in the subsequent section.




\subsection{Theoretical Motivation}






Our approach in section \ref{methdesc} can be motivated by contrasting it with the covariate shift \cite{Agarwal11} setting. If $X\times Y$ is the input-output space and  $p(x,y)$ and $q(x,y)$ are the source and target distributions in the covariate shift setting, then it is assumed that $p(y|x)=q(y|x)$ but $p(x)\neq q(x)$. One of the standard solutions for such settings is importance sampling where the source data is sampled proportional to $\frac{q(x)}{p(x)}$ in order to mimic as closely as possible the target distribution. In our case, the dataset is the same but the classifiers (i.e. complex and simple) are different. We can think of this as a setting where $p(x)=q(x)$ as both the models learn from the same data, however, $p(y|x)\neq q(y|x)$ where $p(y|x)$ and $q(y|x)$ correspond to the outputs of complex and simple classifiers, respectively. Given that we want the simple model to approach the complex models performance, a natural analog to the importance weights used in covariate shift is to weight samples by $\frac{p(y|x)}{q(y|x)}$ which is the essence of our method as described below in section \ref{methdesc}.

Now, let us formally show that the expected cross-entropy loss of a model is no greater than the reweighted version with an additional positive slack term. This implies that training the simple model with this reweighting is a valid and sound procedure of the loss we want to optimize.

\begin{lemma}
Let $p_{\theta}(y|x)$ be the softmax scores on a specific model $\theta$ from simple model space $\Theta$. Let $\theta^{*} \in \Theta$ be the set of simple model parameters that is obtained from a given learning algorithm for the simple model on a training dataset. Let $p_c(y|x)$ be a pre-trained complex classifier whose loss is smaller than $\theta^{*}$ on the training distribution. Let $\beta\geq 1$ be a scalar clip level for the ratio $p_c(y|x)/p_{\theta^{*}}(y|x)$. Then we have:
\begin{align}
     &\mathbb{E}[-\log p_\theta(y|x)] \nonumber\\&\leq \mathbb{E}\left[ \max\left(1,\min\left(\frac{p_c(y|x)}{p_{\theta^{*}}(y|x)},\beta\right)\right)  \log \left(\frac{1}{p_\theta(y|x)} \right) \right]  \nonumber \\
    & \quad\quad\quad - \mathbb{E}\left[\log\left(\min\left(\frac{p_c(y|x)}{p_{\theta^{*}}(y|x)},\beta\right)\right)\right] +\log(\beta).
\end{align}
\end{lemma}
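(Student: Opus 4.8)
The plan is to reduce the claimed bound to two elementary sign facts about the clipped importance weight and the log-loss, and then chain them. First I would introduce shorthand for the quantities that recur: write $r = p_c(y|x)/p_{\theta^{*}}(y|x)$ for the unclipped likelihood ratio, $\tilde r = \min(r,\beta)$ for its clip at level $\beta$, and $w = \max(1,\tilde r) = \max\left(1,\min(r,\beta)\right)$ for the sample weight the method actually uses. With this notation the left-hand side is simply $\mathbb{E}\left[\log(1/p_\theta(y|x))\right]$ and the right-hand side is $\mathbb{E}\left[w\log(1/p_\theta(y|x))\right] - \mathbb{E}[\log\tilde r] + \log\beta$, so the target to establish reads
\begin{align}
\mathbb{E}\left[\log\frac{1}{p_\theta(y|x)}\right] \;\le\; \mathbb{E}\left[w\,\log\frac{1}{p_\theta(y|x)}\right] - \mathbb{E}[\log\tilde r] + \log\beta. \nonumber
\end{align}

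Next I would record the two observations that do all the work. Because $p_\theta$ is a softmax output it lies in $(0,1]$, so $\log(1/p_\theta(y|x)) \ge 0$ pointwise; and since $w = \max(1,\cdot) \ge 1$, the integrand obeys $w\log(1/p_\theta) \ge \log(1/p_\theta)$ pointwise. Taking expectations gives the first link of the chain, namely $\mathbb{E}[-\log p_\theta] = \mathbb{E}[\log(1/p_\theta)] \le \mathbb{E}[w\log(1/p_\theta)]$. For the second link, the clip guarantees $\tilde r = \min(r,\beta) \le \beta$, hence $\log\tilde r \le \log\beta$ pointwise and $\mathbb{E}[\log\tilde r] \le \log\beta$; equivalently the slack $\log\beta - \mathbb{E}[\log\tilde r]$ is nonnegative, so appending it to $\mathbb{E}[w\log(1/p_\theta)]$ only enlarges the right-hand side. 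Concatenating the two inequalities produces exactly the statement.

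I would close by checking the side conditions: $\tilde r > 0$ (so that $\log\tilde r$ is defined) holds because softmax scores are strictly positive and $\beta \ge 1 > 0$, and the hypothesis $\beta \ge 1$ is precisely what guarantees $w \ge 1$ as well as the consistency of the nested $\max/\min$. There is no deep obstacle here; the one thing to be careful about is the bookkeeping of signs — that the weight is bounded \emph{below} by $1$ rather than above, and that the log-loss term is nonnegative — since a single sign slip would reverse the inequality. It is worth remarking that the hypothesis that $p_c$ has smaller loss than $\theta^{*}$ is \emph{not} needed to derive the inequality itself; it serves only a motivational role, ensuring that $w$ genuinely up-weights the examples on which the complex model is more confident, so that minimizing the weighted loss is a meaningful surrogate rather than a vacuous bound.
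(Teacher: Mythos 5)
Your proof is correct. It rests on the same two elementary facts as the paper's argument — the weight $w=\max\bigl(1,\min(p_c/p_{\theta^*},\beta)\bigr)$ is at least $1$ while the log-loss $\log(1/p_\theta)$ is nonnegative, and the clipped ratio $\tilde r=\min(p_c/p_{\theta^*},\beta)$ is at most $\beta$ — but you organize them differently. The paper first writes an exact add-and-subtract identity, $\mathbb{E}[-\log p_\theta]=\mathbb{E}[\log(\tilde r/p_\theta)]-\mathbb{E}[\log\tilde r]$, then replaces $\tilde r$ by $w$ inside the first logarithm, and finally invokes the compound inequality $\log(wx)\le w\log(x)+\log(\beta)$ for $\beta\ge w\ge 1$, $x>1$. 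You instead bypass the identity entirely and show directly that the right-hand side minus the left-hand side is a sum of two pointwise-nonnegative contributions, $\mathbb{E}\left[(w-1)\log(1/p_\theta)\right]\ge 0$ and $\log\beta-\mathbb{E}[\log\tilde r]\ge 0$. Your route is more transparent about why the bound is valid, and it makes explicit — correctly — that the hypothesis that $p_c$ has smaller loss than $p_{\theta^*}$ plays no logical role in the inequality. What the paper's longer route buys is interpretive structure: its intermediate quantity $\mathbb{E}[\log(w/p_\theta)]-\mathbb{E}[\log\tilde r]$ is a tighter bound than the final one, and isolating the loss of tightness in the single step $\log(wx)\le w\log(x)+\log(\beta)$ is what supports the paper's Remarks 1 and 2 on when the slack is small (namely when $p_c/p_{\theta^*}$ is rarely much below $1$) and on quantifying the bias introduced by the reweighting; your decomposition, while cleaner, attributes the slack to two terms that are less directly tied to that discussion.
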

\begin{proof}
\begin{align}
\label{eqn:upper}
   &\mathbb{E}[-\log(p_{\theta}(y|x))] \nonumber \\&=  \mathbb{E}\left[\log\left(\frac{1}{p_{\theta}(y|x)}\cdot\min\left(\frac{p_c(y|x)}{p_{\theta^{*}}(y|x)},\beta\right)\right) \right]\nonumber \\& ~~~~~- \mathbb{E}\left[\log \left(\min\left(\frac{p_c(y|x)}{p_{\theta^{*}}(y|x)},\beta\right) \right)\right]\nonumber  \\
   \hfill & \leq \mathbb{E} \left[ \log\left(\frac{1}{p_{\theta}(y|x)}\cdot\max \left(1,\min\left(\frac{p_c(y|x)}{p_{\theta^{*}}(y|x)},\beta\right) \right)\right) \right]\nonumber \\
   \hfill & \quad\quad\quad -  \mathbb{E}\left[\log \left(\min\left(\frac{p_c(y|x)}{p_{\theta^{*}}(y|x)},\beta\right) \right)\right]\nonumber \\
   \hfill & \overset{a}{\leq} \mathbb{E}\left[ \max\left(1,\min\left(\frac{p_c(y|x)}{p_{\theta^{*}}(y|x)},\beta\right)\right)  \log \left(\frac{1}{p_\theta(y|x)} \right) \right] \nonumber \\
  \hfill & \quad\quad\quad -  \mathbb{E}\left[\log \left(\min\left(\frac{p_c(y|x)}{p_{\theta^{*}}(y|x)},\beta\right) \right)\right]+\log(\beta)
\end{align}
(a) The inequality $\log(wx) \leq w\log(x) + \log(\beta)$ holds for all $\beta\geq w\geq 1, x>1$ where $\beta\geq1$ is any arbitrary clip level.

\end{proof}

\textbf{Remark 1:} We observe that re-weighing every sample by $\max(1,\min(\frac{p_c(y|x)}{p_{\theta^{*}}(y|x)}, \beta))$ and re-optimizing using the simple model training algorithm is a sound way to optimize the cross-entropy loss of the simple model on the training data set. The reason we believe that optimizing the upper bound could be better is because many simple models such as decision trees are trained using a simple greedy approach. Therefore, reweighting samples based on an accurate complex model could induce the appropriate bias leading to better solutions. Moreover, in equation \ref{eqn:upper}, the second inequality is the main place where there is slack between the upper bound and the quantity we are interested in bounding. This inequality exhibits a smaller slack if $w = \frac{p_c(y|x)}{p_{\theta^*}(y|x)}$ is not much smaller than $1$ with high probability. This is typical when $p_c$ comes from a more complex model that is more accurate than that of $\theta^{*}$. 

\textbf{Remark 2:} The upper bound used for the last inequality in the proof leads to a quantification of the bias introduced by weighting for a particular dataset. Note that in practice, we determine  the optimal $\beta$ via cross-validation.

\subsection{Intuitive Justification}

Intuitively, assuming $w\ge 1$ implies that the complex model finds an input easier (i.e. higher score or confidence) to classify in the correct class than does the simple model. Although in practice this may not always be the case, it is not unreasonable to believe that this would occur for most inputs, especially if the complex model is highly accurate.

The motivation for our approach conceptually does not contradict \cite{profweight}, where hard samples for a complex model are weighted low. These would still be potentially weighted low as the numerator would be small. However, the main difference would occur in the weighting of the easy examples for the complex model,  which rather than being uniformly weighted high, would now be weighted based on the assessment of the simple model. This, we believe, is important information as stressing inputs that are already extremely easy for the simple model to classify will possibly not lead to the best generalization. It is probably more important to stress inputs that are somewhat hard for the simple model but easier for the complex model, as that is likely to be the critical point of information transfer. Even though easier inputs for the complex model are likely to get higher weights, ranking these based on the simple model's assessment is important and not captured in previous approaches. Hence, although our idea may appear to be simple, we believe it is a significant jump conceptually in that it also takes into account the simple model's behavior to improve itself.

Our method described in the next section is a generalization of this idea and the motivation presented in the previous section. If the confidences of the complex model are representative of difficulty then we could leverage them alone. However, many times as seen in previous work \cite{profweight}, they may not be representative and hence using confidences of lower layers or simpler forms of the complex classifier can be very helpful.

\begin{algorithm}[t]
    \caption{Our proposed method SRatio.}
    \label{SRatio}
\begin{algorithmic}
\STATE \textbf{Input:} $n$ (graded) classifiers $\zeta_1$, ..., $\zeta_n$, learning algorithm for simple model $\mathcal{L_S}$, dataset $D_S$ of cardinality $N$, performance gap parameter $\gamma$ and maximum allowed ratio parameter $\beta$.
\STATE
\STATE \textbf{1)} Train simple model on $D_S$, $\mathcal{S}\gets \mathcal{L_S}(D_S,\vec{1}_N)$ and compute its (average) prediction error $\epsilon_S$.\COMMENT{Obtain initial simple model where each input is given a unit weight.}
\STATE
\STATE \textbf{2)} Compute (average) prediction errors $\epsilon_1$, ..., $\epsilon_n$ for the $n$ graded classifiers and store the ones that are at least $\gamma$ more accurate than the simple model i.e. $I\gets\{i\in\{1, ..., n\}~|~ \epsilon_S-\epsilon_i\ge\gamma\}$
\STATE
\STATE \textbf{3)} Compute weights for all inputs $x$ as follows: $w(x)=\frac{\sum_{i\in I}\zeta_i(x)}{m\mathcal{S}(x)}$, where $m$ is the cardinality of set $I$ and $\mathcal{S}(x)$ is the prediction probability/score for the true class of the simple model.
\STATE
\STATE \textbf{4)} Set $w(x)\leftarrow 0$, if $w(x)>\beta$. \COMMENT{Clip the importance of extremely hard examples for the simple model.}
\STATE
\STATE \textbf{5)} Retrain the simple model on the dataset $D_S$ with the corresponding learned weights $\w$, $\mathcal{S}_w\gets \mathcal{L_S}(D_S,\w)$
\STATE
\STATE \textbf{6)} Return $\mathcal{S}_{w}$ 
\end{algorithmic}
\end{algorithm}

\subsection{Method}
\label{methdesc}

We now present our approach SRatio in algorithm \ref{SRatio}, which uses the ideas presented in the previous sections and generalizes the method in \cite{profweight} to be applicable to complex classifiers other than neural networks.

In previous works \cite{confbad,profweight}, it was seen that sometimes highly accurate models such as deep neural networks may not be good density estimators and hence may not provide an accurate quantification of the relative difficulty of an input. To obtain a better quantification, the idea of attaching probes (viz. linear classifiers) to intermediate layers of a deep neural network and then averaging the confidences was proposed. This, as seen in the previous work, led to significantly better results over the state-of-the-art. Similarly, we generalize our method where rather than taking just the output confidences of the complex model as the numerator, we take an average of the confidences over a gradation of outputs produced by taking appropriate simplifications of the complex model. We formalize this notion of graded outputs as follows:

\begin{definition}[Definition ($\delta$-graded)]
Let $X\times Y$ denote the input-output space and $p(x,y)$ the joint distribution over this space. Let $\zeta_1$, $\zeta_2$, ..., $\zeta_n$ denote classifiers that output the prediction probabilities for a given input $x\in X$ for the most probable (or true) class $y\in Y$ determined by $p(y|x)$. We then say that classifiers $\zeta_1$, $\zeta_2$, ..., $\zeta_n$ are $\delta$-graded for some $\delta \in (0,1]$ and a (measurable) set $Z\subseteq X$ if $\forall x\in Z$, $\zeta_1(x)\le \zeta_2(x)\le \cdot\cdot\cdot \le \zeta_n(x)$, where $\int_{x\in Z}p(x) \ge \delta$.
\end{definition}

Loosely speaking, the above definition says that a sequence of classifiers is $\delta$-graded if a classifier in the sequence is at least as accurate as the ones preceding it for inputs whose probability measure is at least $\delta$. Thus, a sequence would be 1-graded if the above inequalities were true for the entire input space (i.e. $Z=X$). Below are some examples of how one could produce $\delta$-graded classifiers for different models in practice.

\begin{itemize}
    \item Deep Neural Networks: The notion of attaching probes, which are essentially linear classifiers (viz. $\sigma(Wx+b)$) trained on intermediate layers of a deep neural network \cite{profweight,probes} could be seen as a way of creating $\delta$-graded classifiers, where lower layer probes are likely to be less accurate than those above them for most of the samples. Thus the idea of probes as simplifications of the complex model, as used in previous works, are captured by our definition.
    \item Boosted Trees: One natural way here could be to consider the ordering produced by boosting algorithms that grow the tree ensemble and use all trees up to a certain point. For example, if we have an ensemble of 10 trees, then $\zeta_1$ could be the first tree, $\zeta_2$ could be the first two trees and so on where $\zeta_{10}$ is the entire ensemble.
    \item Random Forests: Here one could order trees based on performance and then do a similar grouping as above where $\zeta_1$ could be the least accurate tree, then $\zeta_2$ could be the ensemble of $\zeta_1$ and the second most inaccurate tree and so on. Of course, for this and boosted trees one could take bigger steps and add more trees to produce the next $\zeta$ so that there is a measurable jump in performance from one graded classifier to the next.
    \item Other Models: For non-ensemble models such as generalized linear models one too could form graded classifiers by taking different order Taylor approximations of the functions, or by setting the least important coefficients successively to zero by doing function decompositions based on binary, ternary and higher order interactions \cite{molnardecomp}, or using feature selection and starting with a model containing the most important feature(s).
\end{itemize}

Given this, we see in algorithm \ref{SRatio} that we take as input graded classifiers and the learning algorithm for the simple model. Trivially, the graded classifiers can just be the entire complex classifier where we only consider its output confidences. We now take a ratio of the average confidence of the graded classifiers that are at least more accurate than the simple model by $\gamma>0$ and the simple model's confidence. If this ratio is too large (i.e. $> \beta$) we set the weight to zero and otherwise the ratio is the weight for that input. Note that setting large weights to zero reduces the variance of the simple model because it prevents dependence on a select few examples. Moreover, large weights mostly indicate that the input is extremely hard for the simple model to classify correctly and so expending effort on it and ignoring other examples will most likely be detrimental to performance. Best values for both parameters can be found empirically using standard validation procedures. The learned weights $w(x)$ can be used to reweight the simple models corresponding per (training) sample loss $\lambda(x)$, following which we can retrain the simple model.

\begin{table}[htbp]
\centering
\caption{Dataset characteristics, where $N$ denotes dataset size and $d$ is the dimensionality.}
\begin{tabular}{|c|c|c|c|}
  \hline
    {Dataset} & {$N$} & {$d$} & {\# of Classes}  \\
        \hline
     {Ionosphere} & {351} & {34} & {2}  \\
     \hline
    {Ovarian Cancer} & {216} & {4000} & {2}  \\
 \hline
   {Heart Disease} & {303} & {13} & {2}  \\
   \hline
 {Waveform} & {5000} & {40} & {3}  \\
 \hline
{Human Activity} & {10299} & {561} & {6}  \\
 \hline
 {Musk} & {6598} & {166} & {2}  \\
 \hline
 {CIFAR-10} & {60000} & {$32\times32$} & {10}  \\
 \hline
 \end{tabular}
 \label{tab:data}
\end{table}

\begin{table*}[htbp]
\begin{center}
\caption{Below we see the averaged \% errors with 95\% confidence intervals for the different methods on six real datasets. Boosted Trees and Random Forest (100 trees) are the complex models (CM), while a single decision tree and linear SVM are the simple models (SM). Best simple model results are indicated in bold. $\ast$ indicates the simple model has approached the complex models performance.
}
\hspace*{-2.0cm}
\small
\begin{tabular}{|c|c|c|c|c|c|c|c|}
\hline
& \textbf{Complex} & \textbf{CM} & \textbf{Simple} & \textbf{SM}& \textbf{Distill-proxy 1}&\textbf{ConfWeight}&\textbf{SRatio}\\
\textbf{Dataset}  & \textbf{Model} &\textbf{Error } & \textbf{Model} & \textbf{Error}&\textbf{Error (SM)}&\textbf{Error (SM)}&\textbf{Error (SM)}\\ \hline
\multirow{8}{*}{Ionosphere} & &  & Tree & $10.95$ &$10.95$&$11.42$&$\mathbf{8.57}^\ast$\\
&Boosted& $8.10 $&& $\pm 0.4$ &$\pm 0.4$&$\pm 0.8$&$\mathbf{\pm 0.5}$\\\cline{4-8} 
& Trees& $\pm 0.4$ & SVM &$12.38$ &$11.90$&$11.90$&$\mathbf{10.47}$ \\ 
&&&& $\pm 0.6$ &$\pm 0.6$&$\pm 0.6$&$\mathbf{\pm 0.5}$\\\cline{2-8} 
& &   & Tree & $10.95$ & $10.95$&$11.42$&$\mathbf{10.42}$\\
&Random& $6.19 $&& $\pm 0.4$ &$\pm 0.4$&$\pm 0.4$&$\mathbf{\pm 0.1}$\\\cline{4-8} 
& Forest& $\pm 0.4$ & SVM &$12.38$ &$12.38$&$12.38$& $\mathbf{11.42}$\\ 
&&&& $\pm 0.6$ &$\pm 0.6$&$\pm 0.6$&$\mathbf{\pm 0.3}$\\\hline
\multirow{8}{*}{Ovarian Cancer} & &  & Tree & $\mathbf{15.62}$ &$\mathbf{15.62}$&$\mathbf{15.62}$&$\mathbf{15.62}$\\
&Boosted& $4.68 $&& $\mathbf{\pm 0.8}$ &$\mathbf{\pm 0.8}$&$\mathbf{\pm 1.0}$&$\mathbf{\pm 0.5}$\\\cline{4-8} 
& Trees& $\pm 0.4$ & SVM &$\mathbf{1.56}$ &$\mathbf{1.56}$&$\mathbf{1.56}$&$\mathbf{1.56}$ \\ 
&&&& $\mathbf{\pm 0.4}$ &$\mathbf{\pm 0.4}$&$\mathbf{\pm 0.4}$&$\mathbf{\pm 0.4}$\\\cline{2-8} 
& &   & Tree & $15.62$ & $15.62$ & $\mathbf{14.06}$ & $\mathbf{14.04}$\\
&Random& $6.25 $&& $\pm 0.8$ &$\pm 0.8$&$\mathbf{\pm 0.1}$&$\mathbf{\pm 0.1}$\\\cline{4-8} 
& Forest& $\pm 0.8$ & SVM &$\mathbf{1.56}$ &$\mathbf{1.56}$&$\mathbf{1.56}$&$\mathbf{1.56}$ \\
&&&& $\mathbf{\pm 0.4}$ &$\mathbf{\pm 0.4}$&$\mathbf{\pm 0.4}$&$\mathbf{\pm 0.4}$\\\hline
\multirow{8}{*}{Heart Disease} & &  & Tree & $23.88$ &$\mathbf{22.77}$&$23.33$&$\mathbf{22.77}$\\
&Boosted& $15.55 $&& $\pm 0.7$ &$\mathbf{\pm 0.1}$&$\pm 0.3$&$\mathbf{\pm 0.2}$\\\cline{4-8} & Trees& $\pm 0.6$ & SVM &$17.22$ &$\mathbf{16.67}$&$17.22$&$\mathbf{16.77}$ \\ 
&&&& $\pm 0.2$ &$\mathbf{\pm 0.3}$&$\pm 0.2$&$\mathbf{\pm 0.2}$\\\cline{2-8} 
& &   & Tree & $23.88$ &$23.88$&$25.55$&$\mathbf{22.77}$\\
&Random& $15.88 $&& $\pm 0.7$ &$\pm 0.7$& $\pm 0.5$&$\mathbf{\pm 0.3}$\\\cline{4-8} 
& Forest& $\pm 0.6$ & SVM &$17.22$ &$17.22$&$\mathbf{16.67}$&$\mathbf{16.67}$ \\ 
&&&& $\pm 0.2$ &$\pm 0.2$ &$\mathbf{\pm 0.3}$&$\mathbf{\pm 0.2}$\\\hline
\multirow{8}{*}{Waveform} & &  & Tree & $25.43$ &$\mathbf{25.06}$&$\mathbf{25.10}$&$\mathbf{25.06}$\\
&Boosted& $12.96 $&& $\pm 0.2$ &$\mathbf{\pm 0.1}$&$\mathbf{\pm 0.1}$&$\mathbf{\pm 0.1}$\\\cline{4-8} 
& Trees& $\pm 0.1$ & SVM &$14.70$ &$15.33$&$14.70$&$\mathbf{13.72}$ \\ 
&&&&$\pm 0.2$ &$\pm 0.0$&$\pm 0.2$&$\mathbf{\pm 0.2}$\\\cline{2-8} 
& &   & Tree & $25.43$ &$25.43$&$25.43$&$\mathbf{25.06}$\\
&Random& $10.90 $&& $\pm 0.2$ &$\pm 0.2$&$\pm 0.2$&$\mathbf{\pm 0.1}$\\\cline{4-8} 
& Forest& $\pm 0.1$ & SVM & $14.70$ &$14.33$&$14.30$&$\mathbf{12.72}$ \\ 
&&&& $\pm 0.2$  &$\pm 0.0$&$\pm 0.2$&$\mathbf{\pm 0.5}$\\\hline
& &  & Tree & $7.93$ &$7.93$&$7.86$&$\mathbf{7.15}$\\
&Boosted& $6.32$&& $\pm 0.2$ &$\pm 0.1$&$\pm 0.2$&$\mathbf{\pm 0.1}$\\\cline{4-8} 
& Trees& $\pm 0.0$ & SVM & $14.56$ &$15.85$&$\mathbf{13.92}$& $\mathbf{13.92}$\\ 
Human Activity&&&& $\pm 0.1$ &$\pm 0.1$ &$\mathbf{\pm 0.1}$ &$\mathbf{\pm 0.2}$\\\cline{2-8} 
Recognition& &   & Tree & $7.93$ &$7.23$&$7.21$&$\mathbf{6.67}$\\
&Random& $2.34$&& $\pm 0.2$ &$\pm 0.1$&$\pm 0.1$&$\mathbf{\pm 0.0}$\\\cline{4-8} 
& Forest& $\pm 0.0$ & SVM & $14.56$ &$\mathbf{13.92}$&$14.24$&$\mathbf{13.92}$ \\ 
&&&& $\pm 0.1$ &$\mathbf{\pm 0.1}$&$\pm 0.1$&$\mathbf{\pm 0.1}$\\\hline
\multirow{8}{*}{Musk} & &  & Tree & $4.49$ &$6.11$&$4.45$&$\mathbf{4.06}^\ast$\\
&Boosted& $4.06 $&& $\pm 0.1$ &$\pm 0.1$&$\pm 0.1$&$\mathbf{\pm 0.1}$\\\cline{4-8} 
& Trees& $\pm 0.1$ & SVM &$6.11$ &$6.29$&$6.41$&$\mathbf{5.48}$ \\ 
&&&& $\pm 0.1$ &$\pm 0.1$&$\pm 0.1$&$\mathbf{\pm 0.1}$\\\cline{2-8} 
& &   & Tree & $4.49$ &$4.49$&$4.47$&$\mathbf{3.89}$\\
&Random& $2.45 $&& $\pm 0.1$ &$\pm 0.1$&$\pm 0.1$&$\mathbf{\pm 0.1}$\\\cline{4-8} 
& Forest& $\pm 0.1$ & SVM &$6.11$ &$6.16$&$5.96$&$\mathbf{5.53}$ \\ 
&&&& $\pm 0.1$ &$\pm 0.1$&$\pm 0.1$&$\mathbf{\pm 0.1}$\\\hline
\end{tabular}

\label{tab:uciperformance}
\end{center}
\end{table*}

\section{Experiments}

In this section, we empirically validate our approach as compared with other state-of-the-art methods used to improve simple models. We experiment on 6 real datasets from UCI repository \cite{uci}: Ionosphere, Ovarian Cancer (OC), Heart Disease (HD), Waveform, Human Activity Recognition (HAR), Musk as well as CIFAR-10 \cite{cifar}. Dataset characteristics are given in Table \ref{tab:data}.


\subsection{UCI Datasets Setup}

We experiment with two complex models, namely, boosted trees and random forests, each of size 100. For each of the complex models we see how the different methods perform in enhancing two simple models: a single CART decision tree and a linear SVM classifier. Since ProfWeight is not directly applicable in this setting, we compare with its special case ConfWeight which weighs examples based on the confidence score of the complex model. We also compare with two models that serves as a proxy to Distillation, namely \texttt{Distill-proxy 1} and \texttt{Distill-proxy 2} since distillation is mainly designed for cross-entropy loss with soft targets. For \texttt{Distill-proxy 1}, we use the hard targets predicted by the complex models (boosted trees or random forests) as labels for the simple models. For \texttt{Distill-proxy-2}, we use regression versions of trees and SVM for the simple models to fit the soft probabilities of the complex models. For multiclass problems, we train a separate regressor for fitting a soft score for each class and choose the class with the largest soft score. This version performed worse and numbers are relegated to the appendix. We only report numbers for \texttt{Distill-proxy 1} in the main paper. Datasets are randomly split into 70\% train and 30\% test. Results for all methods are averaged over 10 random splits and reported in Table \ref{tab:uciperformance} with 95\% confidence intervals.

For our method, graded classifiers based on the complex models are formed as described before in steps of 10 trees. We have 10 graded classifiers ($10\times 10=100$ trees) for both boosted trees and random forests. The trees in the random forest are ordered based on increasing performance. Optimal values for $\gamma$ and $\beta$ are found using 10-fold cross-validation.

\begin{figure*}[htbp]
\centering
\hspace*{-2.0cm}
\includegraphics[width=0.7\textwidth]{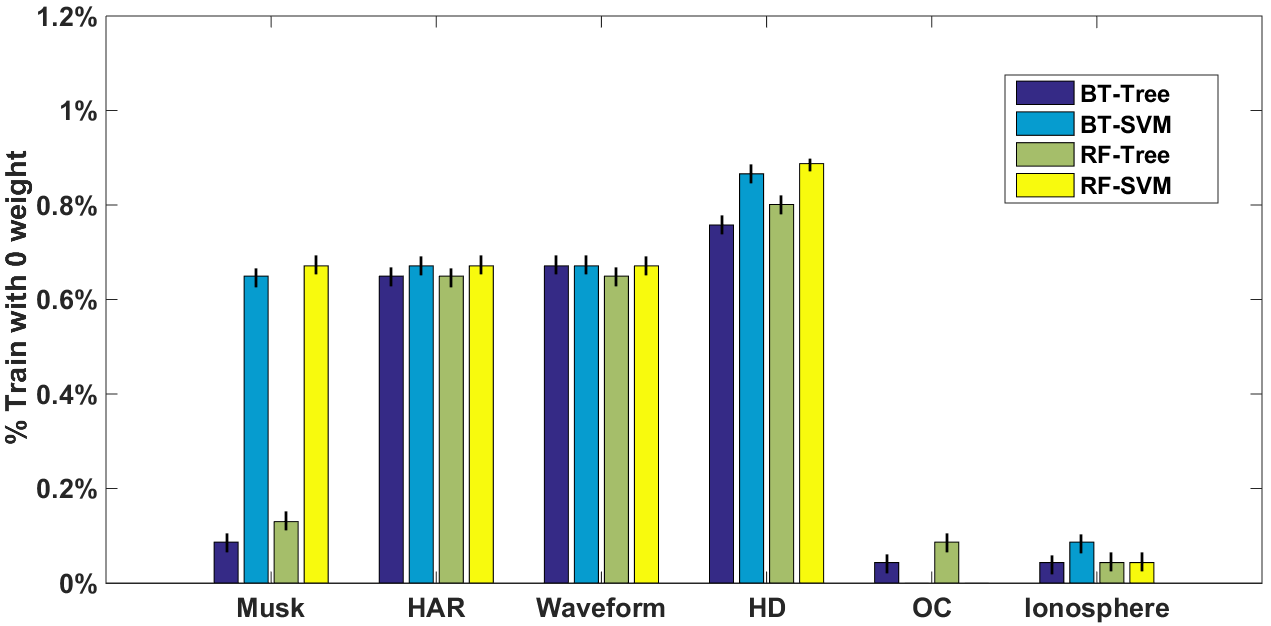}
\includegraphics[width=0.7\textwidth]{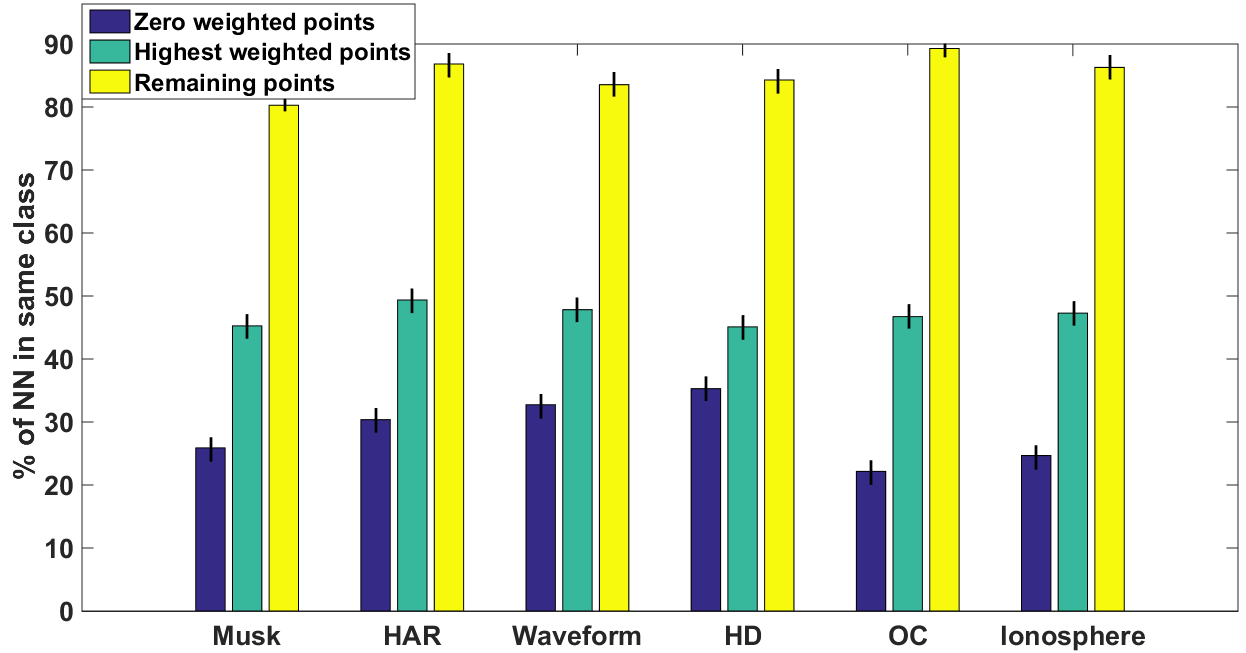}
\caption{Above (left) we see the \% of training set points assigned weight 0 by SRatio at optimal $\beta$ values for each complex (BT, RF) and simple model (Tree, SVM) combination on the 6 UCI datasets. We see that < 1\% of the training set has weights 0 in all cases. Above (right) we analyze why intuitively our reweighting seems to work by considering the \% of nearest neighbors that have zero weight, high weight, and in-between weight. Results are averaged over all complex-simple model combinations. Both plots represent averaged values over 10 random train/test splits with 95\% confidence intervals.}
\label{zerowt}
\end{figure*}

\begin{table}[htbp]
\centering
 \caption{Below we see the \% of training points whose weights based on SRatio have changed by $>1$\% compared to just considering the complex model, i.e., ignoring simple model confidences in Step 3 of algorithm \ref{SRatio}. This depicts the impact of considering the simple models confidences in the weighting, which is our main conceptual contribution. Results are averaged over both complex models.
 }
 \hspace*{-2.0cm}
 \small
 \begin{tabular}{|c|c|c|c|c|c|c|c|c|c|c|c|}
  \hline
   \multicolumn{2}{|c|}{Ionosphere}  & \multicolumn{2}{|c|}{OC} & \multicolumn{2}{|c|}{HD} & \multicolumn{2}{|c|}{Waveform} & \multicolumn{2}{|c|}{HAR} & \multicolumn{2}{|c|}{Musk}  \\ 
    \hline
     Tree & SVM & Tree & SVM &  Tree & SVM & Tree & SVM & Tree & SVM & Tree & SVM \\
     \hline
     37.40 & 90.65  & 8.55 & 6.57 &  92.02 & 99.06 & 44.9 & 99.7 & 5.9 & 29.45 & 7.5 & 13.93 \\
     \hline
    ($\pm$ 2.4) & ($\pm$ 1.1)  & ($\pm$ 0.2) & ($\pm$ 1) &  ($\pm$ 1.8) & ($\pm$ 0.1) & ($\pm$ 2.4) & ($\pm$ 0.1) & ($\pm$ 0.9) & ($\pm$ 1.3) & ($\pm$ 0.5) & ($\pm$ 0.7) \\
 \hline
 \end{tabular}
  \label{tab:chgperc}
\end{table}

\subsection{CIFAR-10 Setup}

The setup we follow here is very similar to previous works \cite{profweight}. The complex model is an 18 unit ResNet with 15 residual (Res) blocks/units. We consider a simple model that consists of 3 Res units, 5 Res units and 7 Res units. Each unit consists of two $3\times 3$ convolutional layers with either 64, 128, or 256 filters (the exact architecture is given in the appendix). A $3\times 3$ convolutional layer with $16$ filters serves an input to the first ResNet block, while an average pooling layer followed by a fully connected layer with $10$ logits takes as input the output of the final ResNet block for each of the models. \footnote{Tensorflow 1.5.0 was used for CIFAR-10 experiments}

We form 18 graded classifiers by training probes which are linear classifiers with softmax activations attached to flattened intermediate representations corresponding to the 18 units of ResNet (15 Res units + 3 others). As done in prior studies, we split the $50000$ training samples from the CIFAR-10 dataset into two training sets of $30000$ and $20000$ samples, which are used to train the complex and simple models, respectively. $500$ samples from the CIFAR-10 test set are used for validation and hyperparameter tuning (details in appendix). The remaining $9500$ are used to report accuracies of all the models. Distillation \cite{distill} employs cross-entropy loss with soft targets to train the simple model. The soft targets are the softmax outputs of the complex model's last layer rescaled by temperature $t=0.5$ which was selected based on cross-validation. For ProfWeight, we report results for the area under the curve (AUC) version as it had the best performance in a majority of the cases in the prior study. Details of $\beta$ and $\gamma$ values that we experimented with to obtain the results in Table \ref{tab:acc} are in the appendix.

\begin{table*}[htbp]
\centering
 \caption{Below we observe the averaged accuracies (\%) of simple models SM-3 (3 Res units), SM-5 (5 Res units) and SM-7 (7 Res units) trained with various weighting methods and distillation. The complex model achieved $84.5 \%$ accuracy. Statistically significant best results are indicated in bold.
 }
 \begin{tabular}{|c|c|c|c|}
  \hline
    \hfill & SM-3 & SM-5 & SM-7  \\ 
    \hline
     Standard & 73.15 ($\pm$ 0.7) & 75.78 ($\pm$0.5) & 78.76 ($\pm$0.35)  \\
     \hline
ConfWeight & 76.27 ($\pm$0.48) & 78.54 ($\pm$0.36) & \textbf{81.46} ($\pm$0.50)  \\
\hline
Distillation & 65.84 ($\pm$0.60) & 70.09 ($\pm$0.19)
&73.4 ($\pm$0.64) \\
\hline
 ProfWeight & 76.56 ($\pm$0.51) & 79.25 ($\pm$0.36)
 & \textbf{81.34} ($\pm$0.49) \\
 \hline
 SRatio & \textbf{77.23} ($\pm$0.14) & \textbf{80.14} ($\pm$0.22)
 &   \textbf{81.89} ($\pm$0.28) \\
 \hline
 \end{tabular}
  \label{tab:acc}
\end{table*}

\subsection{Observations}

In the experiments on the 6 UCI datasets depicted in Table \ref{tab:uciperformance}, we observe that we are consistently the best performer, either tying or superseding other competitors. Given the 24 experiments based on dataset, complex model, and simple model combinations ($6\times 2\times 2=24$), we are the outright best performer in 14 of those cases, while being tied with one or more other methods for best performance in the remaining 10 cases. In fact, in 2 cases where we are outright best performers, dataset=Ionosphere, complex model =boosted trees, simple model = Tree and dataset=Musk, complex model =boosted trees, simple model = Tree, our method enhances the simple model's performance to match (statistically) that of the complex model. We believe this improvement to be significant. In fact, on the Musk dataset, we observe that the simple tree model enhanced using our method, where the complex model is a random forest, supersedes the performance of the other complex model. On the Ovarian Cancer dataset, linear SVM actually seems to perform best, even better than the complex models. A reason for this may be that the dataset is high dimensional with few examples. Due to this, it also seems difficult for any of the methods to boost the simple model's performance.

We now offer an intuition as to why our weighting works. First, we see in figure \ref{zerowt}(left) that our assertion of only a very small fraction of the training set being assigned 0 weights based on parameter $\beta$, which upper bounds the weights, is true (< 1\% is assigned weight 0). For Ovarian Cancer (OC), SVM was better than the complex models and hence no points had weight 0. 

In figure \ref{zerowt}(right), we see the intuitive justification for the learned weights. Given 10 nearest neighbors (NN) of a data point, let $\nu_s$ and $\nu_d$ denote the number of those NNs that belong to its class (i.e. same class), and most frequent different class respectively. Then the Y-axis depicts $\frac{\nu_s}{\nu_s+\nu_d}\times 100$. This metric gives a feel for how difficult it is likely to be to correctly classify a point. We thus see that most of the 10 NNs for the 0 weight points lie in a different class and so likely are almost impossible for a simple model to classify correctly. The highest weighted points (i.e. top 5 percentile) have nearest neighbors from the same class almost 50\% of the time and are close to the most frequent (different) class. This showcases why the 0 weight points are so difficult for the simple models to classify, while the highest weighted points seem to outline an important decision boundary. With some effort a simple model should be able to classify them correctly and so focusing on them is important. The remaining points (on average) seem to be relatively easy for both complex and simple models.

In Table \ref{tab:chgperc}, we see the percentage of weights that change ($>1$\%) by considering the simple model in addition to the complex model, as opposed to just considering the complex model. A reasonable percentage of the weights are impacted by considering the simple models predictions, thus indicating that our method can have significant effect on a learning algorithm. An interesting side fact is that there seems to be some correlation, albeit not perfect, between the simple models initial performance and percentage of weights that change. For instance, HD and Waveform seem to be the hardest to predict and a high percentage of weights are changed to improve the simple models. While Musk seems to be one of the easier ones to predict and a small percentage of weights are changed.

On the CIFAR-10 dataset, we see that our method outperforms other state-of-the-art methods where the simple model has 3 Res units and 5 Res units. For 7 Res units, we tie with ProfWeight and ConfWeight. Given the motivation of resource limited settings where memory constraints can be stringent \cite{pc1,pc2}, SM-3 and SM-5 are anyway the more reasonable options. In general, we see from these experiments that the simple model's predictions can be highly informative in improving its own performance.

\section{Discussion}

Our approach and results outline an interesting strategy, where even in cases that one might want a low complexity simple model, it might be beneficial to build an accurate complex model first and use it to enhance the desired simple model. Such is exactly the situation for the manufacturing engineer described in the introduction that has experience with simple interpretable models that provide him with knowledge that a complex model with better performance cannot offer.

Although our method may appear to be simplistic, we believe it to be a conceptual jump. Our method takes into account the difficulty of a sample not just based on the complex model, but also the simple model which a priori is not obvious and hence possibly ignored by previous methods that may or may not be weighting-based. Moreover, we have empirically shown that our method either outperforms or matches the best solutions across a wide array of datasets for different complex model (viz. boosted trees, random forests and ResNets) and simple model (viz. single decision trees, linear SVM and small ResNets) combinations. In fact, in a couple of cases, a single tree approached the performance of a 100 boosted trees using our method. In addition, we also formalized and generalized the idea behind probes presented in previous work \cite{profweight} to classifiers beyond deep neural networks and gave examples of practical instantiations. In the future, we would like to uncover more such methods and study their theoretical underpinnings.

\appendix

\newpage
\section{ Experimental Details}
\vspace{-.5cm}
\begin{table}[h!]
\centering
\caption{Dataset characteristics, where $N$ denotes dataset size and $d$ is the dimensionality.}
\begin{tabular}{|c|c|c|c|}
  \hline
    \small{Dataset} & \small{$N$} & \small{$d$} & \small{\# of Classes}  \\
        \hline
     \small{Ionosphere} & \small{351} & \small{34} & \small{2}  \\
     \hline
    \small{Ovarian Cancer} & \small{216} & \small{4000} & \small{2}  \\
 \hline
   \small{Heart Disease} & \small{303} & \small{13} & \small{2}  \\
   \hline
 \small{Waveform} & \small{5000} & \small{40} & \small{3}  \\
 \hline
\small{Human Activity} & \small{10299} & \small{561} & \small{6}  \\
 \hline
 \small{Musk} & \small{6598} & \small{166} & \small{2}  \\
 \hline
 \small{CIFAR-10} & \small{60000} & \small{$32\times32$} & \small{10}  \\
 \hline
 \end{tabular}
 \label{tab:data}
\end{table}

\begin{table}[h!]
 \begin{center}    
    \begin{tabular}{|l|c|} 
    \hline 
      \textbf{Units} & \textbf{Description} \\
      \hline
 Init-conv & $\left[ \begin{array}{c}
 3 \times 3~\mathrm{conv},~16 
\end{array} \right] $\\
\hline
      Resunit:1-0 & $\left[ \begin{array}{c}
 3\times 3~\mathrm{conv},~64 \\
 3\times 3~\mathrm{conv},~64
\end{array}\right]$  \\
 \hline
    (Resunit:1-x)$\times$ 4 & $\left[ \begin{array}{c}
 3\times 3~\mathrm{conv},~64 \\
 3\times 3~\mathrm{conv},~64
\end{array}\right] \times 4$ \\
\hline
      (Resunit:2-0)& $\left[ \begin{array}{c}
 3\times 3~\mathrm{conv},~128 \\
 3\times 3~\mathrm{conv},~128
\end{array}\right] $ \\
 \hline 
   (Resunit:2-x)$\times$ 4& $\left[ \begin{array}{c}
 3\times 3~\mathrm{conv},~128 \\
 3\times 3~\mathrm{conv},~128
\end{array}\right] \times 4 $ \\
   \hline 
       (Resunit:3-0)& $\left[ \begin{array}{c}
 3\times 3~\mathrm{conv},~256 \\
 3\times 3~\mathrm{conv},~256
\end{array}\right] $ \\
\hline
      (Resunit:3-x)$\times$ 4& $\left[ \begin{array}{c}
 3\times 3~\mathrm{conv},~256 \\
 3\times 3~\mathrm{conv},~256
\end{array}\right] \times 4 $ \\
\hline 
    \multicolumn{2}{|c|}{Average Pool}  \\
 \hline 
     \multicolumn{2}{|c|}{Fully Connected - 10 logits} \\
  \hline   
    \end{tabular}
     \caption{$18$ unit Complex Model with $15$ ResNet units.}   
  \end{center}
  \label{tab:cm}
\caption{Residual Network Model used as the complex model for CIFAR-10 experiments in Section 4.2}
\end{table}

\begin{table}[h!]
\label{tab:sm}
  \begin{center}    
    \begin{tabular}{|l|c|c|} 
    \hline 
      \textbf{Simple Model IDs} & \textbf{Additional Resunits}& \textbf{Rel. Size} \\
      \hline
 SM-3 & None & $\approx$ 1/5\\
\hline
      SM-5 & (Resunit:1-x)$\times 1$ & $\approx$ 1/3\\ & (Resunit:2-x)$\times 1$ & \\
 \hline
    SM-7 & (Resunit:1-x)$\times 2$ &\\
    & (Resunit:2-x)$\times 1$ & $\approx$ 1/2\\
    & (Resunit:3-x)$\times 1$ &\\
\hline
    \end{tabular}
     \caption{Additional Resnet units in the Simple Models apart from the commonly shared ones. The last column shows the approximate size of the simple models relative to the complex neural network model in the previous table.}   
  \end{center}
\end{table}

\vspace{20pt}
\begin{table}[htbp]
\centering
\hspace*{-2.0cm} \begin{tabular}{|c|c|c|c|c|c|c|c|c|c|c|c|c|c|c|c|c|c|}
  \hline
   Probes & 1 & 2 & 3 &4 &5&6&7&8&9 \\
   \hline
Training Set $2$ & 0.298 & 0.439 
& 0.4955 &  0.53855 & 0.5515 
& 0.5632 
& 0.597 
& 0.6173 
& 0.6418 
\\
\hline
Probes &10 & 11 & 12 & 13 & 14 & 15 & 16 & 17 & 18 \\
\hline
Training Set $2$ & 0.66104 &0.6788 &0.70855 
& 0.7614 
& 0.7963 
& 0.82015 
&  0.8259 
& 0.84214 
& 0.845\\
   \hline 
 \end{tabular}
 \caption{Probes at various units and their accuracies on the training set $2$ for the CIFAR-10 experiment. This is used in the $\mathrm{ProfWeight}$ algorithm to choose the unit above which confidence scores needs to be averaged.}
 \label{tab:probe}
\end{table}

\begin{table}[htbp]
\begin{center}
\caption{Below we see the averaged \% errors with 95\% confidence intervals for Distill-proxy 2 (regression versions of trees and SVM for the simple models that fit soft probabilities from the complex models) on the six real datasets. The results reported using Distill-proxy 1 are in the main paper and are superior to these. Boosted Trees and Random Forest (100 trees) are the complex models (CM), while a single decision tree and linear SVM are the simple models (SM).
}
\small
\begin{tabular}{|c|c|c|c|c|c|}
\hline
& \textbf{Complex} & \textbf{CM} & \textbf{Simple} & \textbf{SM}& \textbf{Distill-proxy 2}\\
\textbf{Dataset}  & \textbf{Model} &\textbf{Error } & \textbf{Model} & \textbf{Error}&\textbf{Error (SM)}\\ \hline
\multirow{8}{*}{Ionosphere} & &  & Tree & $10.95$ &$10.95$\\
&Boosted& $8.10 $&& $\pm 0.4$ &$\pm 0.4$\\\cline{4-6} 
& Trees& $\pm 0.4$ & SVM &$12.38$ &$12.17$ \\ 
&&&& $\pm 0.6$ &$\pm 0.3$\\\cline{2-6} 
& &   & Tree & $10.95$ & $10.95$\\
&Random& $6.19 $&& $\pm 0.4$ &$\pm 0.4$\\\cline{4-6} 
& Forest& $\pm 0.4$ & SVM &$12.38$ &$12.38$\\ 
&&&& $\pm 0.6$ &$\pm 0.6$\\\hline
\multirow{8}{*}{Ovarian Cancer} & &  & Tree & $15.62$ &$15.62$\\
&Boosted& $4.68 $&& $\pm 0.8$ &$\pm 0.8$\\\cline{4-6} 
& Trees& $\pm 0.4$ & SVM &$1.56$ &$1.56$ \\ 
&&&& $\pm 0.4$ &$\pm 0.4$\\\cline{2-6} 
& &   & Tree & $15.62$ & $15.62$ \\
&Random& $6.25 $&& $\pm 0.8$ &$\pm 0.8$\\\cline{4-6} 
& Forest& $\pm 0.8$ & SVM &$1.56$ &$1.56$ \\
&&&& $\pm 0.4$ &$\pm 0.4$\\\hline
\multirow{8}{*}{Heart Disease} & &  & Tree & $23.88$ &$23.69$\\
&Boosted& $15.55 $&& $\pm 0.7$ &$\pm 0.2$\\\cline{4-6} & Trees& $\pm 0.6$ & SVM &$17.22$ &$17.01$ \\ 
&&&& $\pm 0.2$ &$\pm 0.1$\\\cline{2-6} 
& &   & Tree & $23.88$ &$23.88$\\
&Random& $15.88 $&& $\pm 0.7$ &$\pm 0.7$\\\cline{4-6} 
& Forest& $\pm 0.6$ & SVM &$17.22$ &$17.22$ \\ 
&&&& $\pm 0.2$ &$\pm 0.2$ \\\hline
\multirow{8}{*}{Waveform} & &  & Tree & $25.43$ &$25.26$\\
&Boosted& $12.96 $&& $\pm 0.2$ &$\pm 0.1$\\\cline{4-6} 
& Trees& $\pm 0.1$ & SVM &$14.70$ &$15.39$ \\ 
&&&&$\pm 0.2$ &$\pm 0.1$\\\cline{2-6} 
& &   & Tree & $25.43$ &$25.43$\\
&Random& $10.90 $&& $\pm 0.2$ &$\pm 0.2$\\\cline{4-6} 
& Forest& $\pm 0.1$ & SVM & $14.70$ &$14.54$ \\ 
&&&& $\pm 0.2$  &$\pm 0.0$\\\hline
& &  & Tree & $7.93$ &$7.93$\\
&Boosted& $6.32$&& $\pm 0.2$ &$\pm 0.1$\\\cline{4-6} 
& Trees& $\pm 0.0$ & SVM & $14.56$ &$16.04$\\ 
Human Activity&&&& $\pm 0.1$ &$\pm 0.2$ \\\cline{2-6} 
Recognition& &   & Tree & $7.93$ &$7.45$\\
&Random& $2.34$&& $\pm 0.2$ &$\pm 0.1$\\\cline{4-6} 
& Forest& $\pm 0.0$ & SVM & $14.56$ &$14.23$ \\ 
&&&& $\pm 0.1$ &$\pm 0.3$\\\hline
\multirow{8}{*}{Musk} & &  & Tree & $4.49$ &$6.11$\\
&Boosted& $4.06 $&& $\pm 0.1$ &$\pm 0.1$\\\cline{4-6} 
& Trees& $\pm 0.1$ & SVM &$6.11$ &$6.34$ \\ 
&&&& $\pm 0.1$ &$\pm 0.1$\\\cline{2-6} 
& &   & Tree & $4.49$ &$4.49$\\
&Random& $2.45 $&& $\pm 0.1$ &$\pm 0.1$\\\cline{4-6} 
& Forest& $\pm 0.1$ & SVM &$6.11$ &$6.19$ \\ 
&&&& $\pm 0.1$ &$\pm 0.2$\\\hline
\end{tabular}

\label{tab:uciperformance_2}
\end{center}
\end{table}
\subsection{Additional Training Details}
\textbf{CIFAR-10 Experiments}

\textbf{Complex Model Training:} We trained with an $\ell$-2 weight decay rate of $0.0002$, sgd optimizer with Nesterov momentum (whose parameter is set to 0.9), $600$ epochs and batch size $128$. Learning rates are according to the following schedule: $0.1$ till $40k$ training steps, $0.01$ between $40k$-$60k$ training steps, $0.001$ between $60k-80k$ training steps and $0.0001$ for $>80k$ training steps. This is the standard schedule followed in the code by the Tensorflow authors\footnote{Code is taken from:\\ https://github.com/tensorflow/models/tree/master/research/resnet.}. We keep the learning rate schedule invariant across all our results.

\textbf{Simple Models Training:}
\begin{enumerate}
\item \textbf{Standard}: We train  
a simple model as is on the training set $2$.
\item \textbf{ConfWeight}: We weight each sample in training set $2$ by the confidence score of the last layer of the complex model on the true label. As mentioned before, this is a special case of our method, ProfWeight.
\item \textbf{Distilled-temp-$t$}:
We train the simple model using a cross-entropy loss with soft targets. Soft targets are obtained from the softmax ouputs of the last layer of the complex model (or equivalently the last linear probe) rescaled by temperature $t$ as in distillation of \cite{distill}. By using cross validation, we pick two temperatures that are competitive on the validation set ($t=0.5$ and $t=40.5$) in terms of validation accuracy for the simple models. We cross-validated over temperatures from the set $\{0.5,3,10.5,20.5,30.5,40.5,50\}$.

\item \textbf{ProfWeight} ($>=\ell$): Implementation of our $\mathrm{ProfWeight}$ algorithm where the weight of every sample in training set $2$ is set to the averaged probe confidence scores of the true label of the probes corresponding to units above the $\ell$-th unit. We set $\ell = 13,14$ and $15$. The rationale is that unweighted test scores of all the simple models in Table 2 are all below the probe precision of layer $16$ on training set $2$ but always above the probe precision at layer $12$. The unweighted (i.e. Standard model) test accuracies from Table 2 can be checked against the accuracies of different probes on training set $2$ given in Table 5 in the appendix.

\item \textbf{SRatio}: We average confidence scores from $\ell = 13, 14$ and $15$ as done above for ProfWeight and divide by the simple models confidence. In each case, we optimize over $\beta$ which is increased in steps of $0.5$ from $1.5$ to $10$.
\end{enumerate}


\subsection{Experimental results for \texttt{Distill-proxy 2}}
We provide results for the second variant of Distillation \texttt{Distill-proxy 2} in Table \ref{tab:uciperformance}.

\bibliography{ExAbsent}
\bibliographystyle{plain}
\end{document}